\title{Online Flocking Control of UAVs with Mean-Field Approximation}
\author{Paper ID: }
\date{December 2019}
\definecolor{orcidlogocol}{HTML}{A6CE39}
\tikzset{
  orcidlogo/.pic={
    \fill[orcidlogocol] svg{M256,128c0,70.7-57.3,128-128,128C57.3,256,0,198.7,0,128C0,57.3,57.3,0,128,0C198.7,0,256,57.3,256,128z};
    \fill[white] svg{M86.3,186.2H70.9V79.1h15.4v48.4V186.2z}
                 svg{M108.9,79.1h41.6c39.6,0,57,28.3,57,53.6c0,27.5-21.5,53.6-56.8,53.6h-41.8V79.1z M124.3,172.4h24.5c34.9,0,42.9-26.5,42.9-39.7c0-21.5-13.7-39.7-43.7-39.7h-23.7V172.4z}
                 svg{M88.7,56.8c0,5.5-4.5,10.1-10.1,10.1c-5.6,0-10.1-4.6-10.1-10.1c0-5.6,4.5-10.1,10.1-10.1C84.2,46.7,88.7,51.3,88.7,56.8z};
  }
}
\newcommand\orcidicon[1]{\href{https://orcid.org/#1}{\mbox{\scalerel*{
\begin{tikzpicture}[yscale=-1,transform shape]
\pic{orcidlogo};
\end{tikzpicture}
}{|}}}}
\newcommand\AtPageUpperMyright[1]{\AtPageUpperLeft{
 \put(\LenToUnit{0.5\paperwidth},\LenToUnit{-1cm}){
     \parbox{0.5\textwidth}{\raggedleft\fontsize{10}{11}\selectfont #1} }
 }}
\newcommand{\conf}[1]{
\AddToShipoutPictureBG*{
\AtPageUpperMyright{#1}
}
}
 \providecommand{\norm}[1]{\lVert#1\rVert}
\DeclareMathOperator*{\argmax}{arg\,max}
\title{\LARGE \bf
Online Flocking Control of UAVs with Mean-Field Approximation
}
\author{Malintha Fernando
\thanks{Malintha Fernando is with Luddy School of Informatics, Computing, and Engineering at Indiana University, Bloomington, IN 47405, USA. E-mail:
        {\tt\small ccfernan@iu.edu}.}
\thanks{Video demonstration at: \url{https://youtu.be/KVkNUKgViSg}.}
\thanks{Code implementation: \url{https://github.com/malintha/mean_field_flocking}.}
}
\theoremstyle{definition}
\newtheorem{definition}{Definition}
\theoremstyle{proposition}
\newtheorem{theorem}{Theorem}[]
\begin{document}
\conf{\textcolor{black}{\textsuperscript{\textcopyright} IEEE Intl. Conf. on Robotics and Automation 2021}}
\maketitle

\begin{abstract}
This work presents a novel, inference-based approach to the distributed and cooperative flocking control of aerial robot swarms.
The proposed method stems from the Unmanned Aerial Vehicle (UAV) dynamics by limiting the latent set to the robots' feasible action space, thus preventing any unattainable control inputs from being produced and leading to smooth flocking behavior.
By modeling the inter-agent relationships using a pairwise energy function, we show that interacting robot swarms constitute a Markov Random Field.
Our algorithm builds on the Mean-Field Approximation and incorporates the collective behavioral rules: cohesion, separation, and velocity alignment.
We follow a distributed control scheme and show that our method can control a swarm of UAVs to a formation and velocity consensus with real-time collision avoidance.
We validate the proposed method with physical UAVs and high-fidelity simulation experiments.

\end{abstract}

\section{Introduction}

With recent developments in aerial robotics, large-scale Unmanned Aerial Vehicle (UAV) swarms show potential in numerous application domains, such as environmental monitoring, performing search and rescue operations, and facilitating the burgeoning demand for mobile cellular networks \cite{sharma2016uav, tolstaya2020learning}. 
Extensive literature addresses the problem of controlling large gatherings of UAVs, with varying degrees of scalability, optimality, flight performance, and control architecture. 
However, as pointed out in recent work \cite{honig2018trajectory, luis2020online, zhu2019distributed}, coordinating such formations online while avoiding collisions is still challenging. Further, small-scale UAVs have limited computational capabilities, making it extremely difficult to perform computationally expensive controlling and planning on board.

These challenges have inspired researchers to investigate the functionality of biological swarms that richly demonstrate collective behavior while adhering to physical and dynamic constraints \cite{olfati2006flocking}.
It has long been known that the flocking behavior of birds can be accounted for by a set of simple rules, namely  \textit{cohesion}, \textit{separation} and \textit{velocity alignment} \cite{reynolds1987flocks} \cite{okubo1986dynamical}.
We propose a novel framework based on statistical inference to incorporate swarming rules to coordinate multiple robots.
Specifically, our approach stems from the \textit{differentially flat} dynamics of quadrotors and yields a formation and velocity consensus. 
Such a scheme bolsters the behavioral rules to the robots' dynamics and thus significantly improves the feasibility of the trajectories by averting any unattainable control inputs.
Since the outcome space follows the robots' dynamic model, it greatly reduces the learning and tuning efforts needed.

Myriad literature from different domains, such as particle phase separation \cite{blume1971ising}, image segmentation \cite{krahenbuhl2012efficient}, and Simultaneous Localization And Mapping (SLAM) \cite{shankar2020mrfmap}, amply explore Markov Random Field (MRF)-based approaches to model agent interactions in different systems.
MRFs provide a compelling framework for describing the behavioral rules underlying flocking.
Therefore, the key idea of this work subsumes the notion that a neighborhood of communicating robots in a swarm constitutes an MRF. 
By modeling the interactions among robots using a
Self-Propelled Particle (SPP)-based energy function, we define a domain for the random variables induced by the robots’ control actions.
We propose inference on the Markov network to approximate the \textit{best} control input that minimizes the \textit{energy} inside the neighborhood. 
Since our method's coordination and collision avoidance depend on reacting to the neighborhoods online, we emphasize the necessity of a computationally tractable inference method.
Thus, we propose using an approximate yet fast variational inference-based Mean-Field Approximation (MFA) to keep the computational effort bounded.

The main contribution of our work is a novel, distributed, flocking algorithm that combines differentially flat dynamics.
We validate the consensus reaching procedure of our method with simulations and physical robot experiments. 
Further, we provide evaluation results for the convergence of the proposed algorithm for varying neighborhood sizes and a comparison of the velocity consensus reaching process against the Vicsek model \cite{vicsek1995novel}.
In contrast to existing UAV swarm formation control methods \cite{honig2018trajectory}  \cite{zhu2019distributed} \cite{alonso2017multi}, our method does not require explicit collision constraints or extensive training \cite{tolstaya2020learning} \cite{vasarhelyi2018optimized}. Furthermore, due to the fast converging nature, our method is suitable for use in real-time aerial swarm coordination.

\section{Related Work}





Numerous studies have been conducted on navigation and formation control of aerial robot swarms of varying sizes in indoor and outdoor environments \cite{honig2018trajectory, zhu2019distributed, fernando2019formation, turpin2013trajectory, 8276634}. Even though the state-of-the-art aerial swarms contain thousands of members, the trajectories are mainly precomputed and stored before execution \cite{vasarhelyi2018optimized}. 
Convex programming-based formation control methods are proposed in \cite{luis2020online} \cite{zhu2019distributed} and \cite{alonso2017multi} with online trajectory computations. However, none of the methods have demonstrated fundamental flocking behaviors, though \cite{zhu2019distributed} shows the capability to split and merge robot teams during the navigation. In \cite{shishika2017mosquito}, the authors used the collective motion of mosquitoes to derive a real-time pursuit law for quadrotors.

On the other hand, evolutionary algorithms \cite{vasarhelyi2018optimized} and graph neural network-based methods \cite{tolstaya2020learning} have shown promise in achieving emergent behavior in quadrotor swarms with local communications. The authors have demonstrated flocking with cohesion and velocity consensus behaviors, yet the approaches are greatly susceptible to tuning and learning. V{\'a}s{\'a}rhelyi et al. \cite{vasarhelyi2014outdoor} reported unavoidable oscillations in the real hardware during the navigation incurred by the flocking controller.
Further, \cite{hu2020vgai} and \cite{schilling2019learning} propose data-driven approaches for the vision-based flocking of quadrotors. In \cite{xi2006gibbs} and \cite{fernando2020swarming}, the authors proposed an MRF-based approach to coordinate a swarm of robots. However, they discuss neither flocking consensus nor the dynamics of the particles.

Numerous methods have been proposed in the swarming community to simulate self-organizing behavior with propelled particle dynamics \cite{tanner2003stable, saber2003flocking, gazi2013lagrangian}. Many such methods suffer from issues of unbounded collision avoidance forces \cite{gazi2013lagrangian} and irregular fragmentation  \cite{olfati2006flocking} \cite{tanner2003stable}. 
These limitations, especially the former, can lead highly agile aerial vehicles to crash.
Eliminating such unfeasible inputs while preserving the consensus is not trivial and highly dependent on tuning parameters. To address this issue, we tailor the control input space to suit the robots' physical limitations by eliminating the possibility of producing dynamically unfeasible trajectories. 
We place our work in the conjuncture of aerial robot formation control, flocking algorithms, and multi-agent planning algorithms. 

\section{Preliminaries}
\subsection{Markov Random Fields}

Markov networks or MRFs represent the relationships among interacting random variables when the directionality among them is irrelevant.
In this work, we assume that there exist symmetrical interactions among homogeneous robots in a \textit{local neighborhood}, and these constitute the framework of undirected graphical models. We start by defining the concept of factors in an MRF.

Let $X = \{X_1, ..., X_N\}$ be a set of random variables in an undirected graphical model. 
\begin{definition}
Let $D \subset X$. We define factor $\phi$ as a generic function that maps the values of $D$ to a real number. Thus, $\phi : \mathrm{Value}(D) \xrightarrow{} \mathbb{R}$.
\end{definition}
We now define the joint probability distribution for $X$ parameterized by a set of factors $\Phi$.

\begin{definition}
By considering a complete factorization $\Phi$ of distribution $X$, over subsets $D_1,..,D_M$, we define the joint probability function as
\begin{equation}
    P_\Phi(X_1,...,X_N) = \frac{1}{Z}\Tilde{P}_\Phi(X_1,...,X_N),
    \label{eq:jointP}
\end{equation}
where $\tilde{P}_\Phi(X_1,...,X_N) = \phi_1(D_1) \times \phi_2(D_2) ... \times \phi_M(D_M)$ and $Z = \sum_{X_1 ... X_N} \tilde{P}_\Phi(X_1,...,X_N)$ is a normalizing constant called the \textit{partition function}.
\end{definition}
We now obtain a formulation where we can specify \textit{energies} as factors. By using $\phi(D) = \exp{-\epsilon(D)}$, where $\epsilon$ is an \textit{energy function}, and $\phi(D) > 0$, we rewrite Eq. \eqref{eq:jointP} as
\begin{equation}
    P_\Phi(X) = P(X) = \frac{1}{Z} \exp{\{-\sum_{i}^{M}\epsilon_i(D_i) \}}.
    \label{eq:ExpP}
\end{equation}

Eq. \eqref{eq:ExpP} is also known as the log-linear form of the \textit{Gibbs distribution}, $P_\Phi(X_1,...,X_N)$, defined over the factorization $\Phi$. 
The strict positiveness of the factors ensures that the probability distribution is always positive. 
Further, we use this formulation to incorporate context-specific information of the UAV swarm in terms of energy functions. 
Particularly, Eq. \eqref{eq:ExpP} absorbs the concept that the interacting random variables prefer lower energy configurations, thus ideal for minimizing energies induced by other robots and the environment.

\subsection{Dynamic Model}
We derive time parameterized trajectories defined for a differentially flat dynamic model of a quadrotor and a finite set of control actions. Similar to previous works on motion primitive-based planning \cite{liu2017planning}, the state space of a robot consists of the position and its $(n-1)$-th derivatives. Let $x \in \mathbb{R}^{n\times 3}$ be the state space and $u \in \mathbb{R}^3$ be the control inputs of the dynamic system. We use the notation, $x = [p,\dot{p},...,p^{(n-1)}]^T$, and $u=p^{(n)}$ where $p \in \mathbb{R}^3$ is the position of the robot in $SE(3)$. By considering the differential flatness, we can represent the state space model as a \textit{linear time invariant dynamic} system, $\dot{x} = Ax + Bu$ where,
    
\begin{equation}
    A = \begin{bmatrix}
0 & I_3 & 0 & \hdots & 0\\ 
0 & 0 & I_3 & \hdots & 0\\ 
\vdots & \ddots & \ddots & \ddots & \vdots\\ 
0 & \hdots & \hdots & 0 & I_3 \\ 
0 & \hdots & \hdots & 0 & 0 
\end{bmatrix}, 
B = \begin{bmatrix}
0 \\ 0
\\ \vdots
\\ 0
\\ I_3
\end{bmatrix}.
\label{eq:flat_dynamics}
\end{equation}

We show that given an initial state $x_0$ and a constant control input $u$, the trajectory of a linear time-invariant system can be represented by a time parameterized polynomial, which is differentiable at least $n$ times. 

Assume time parameterized polynomial form for the resulting trajectory of the position $p(t)$, \[p(t) = \sum_{q=0}^\mathbf{q} d_q \frac{t^q}{q!},\] where $\mathbf{q}$ is the degree of the polynomial and $d_q$ is the $q$-th coefficient.
As the control input is the $n$-th derivative of the position trajectory, by differentiating $n$ times,
\[
u \equiv p^{(n)}(t) = \sum_{q=0}^{\mathbf{q}-n} d_{q+n} \frac{t^q}{q!}.
\]
Considering $u$ is a constant control for the time interval, by analyzing the terms on the right hand side, we get the coefficients $d_{(n+1)} = d_{(n+2)} \dots d_{\mathbf{q}} = 0$ and, $d_n = u$. By integrating this control expression $n$ times with the initial condition $x_0$,
\begin{equation}
    p(t) = u \frac{t^n}{n!} + \dots + acc(x_0)\frac{t^2}{2} + vel(x_0)t + pos(x_0),
    \label{eq:polynomial}
\end{equation}

where $acc(x_0)$, $vel(x_0)$ and $pos(x_0)$ denotes the acceleration, velocity and position components of the initial condition. As the degree of the position trajectory $p(t)$ is $n$, it is at least $n$ times differentiable in $t$ to obtain the higher-order states of the system such as velocity, acceleration, jerk, etc. Eq. \eqref{eq:flat_dynamics} and Eq. \eqref{eq:polynomial} can be used to obtain the equivalent trajectory in state space form.


\begin{figure}[t]
  \centering
\subfigure[]
 	{\includegraphics[width = 0.24\textwidth,trim={0cm 0cm 0cm 0cm}, clip ]{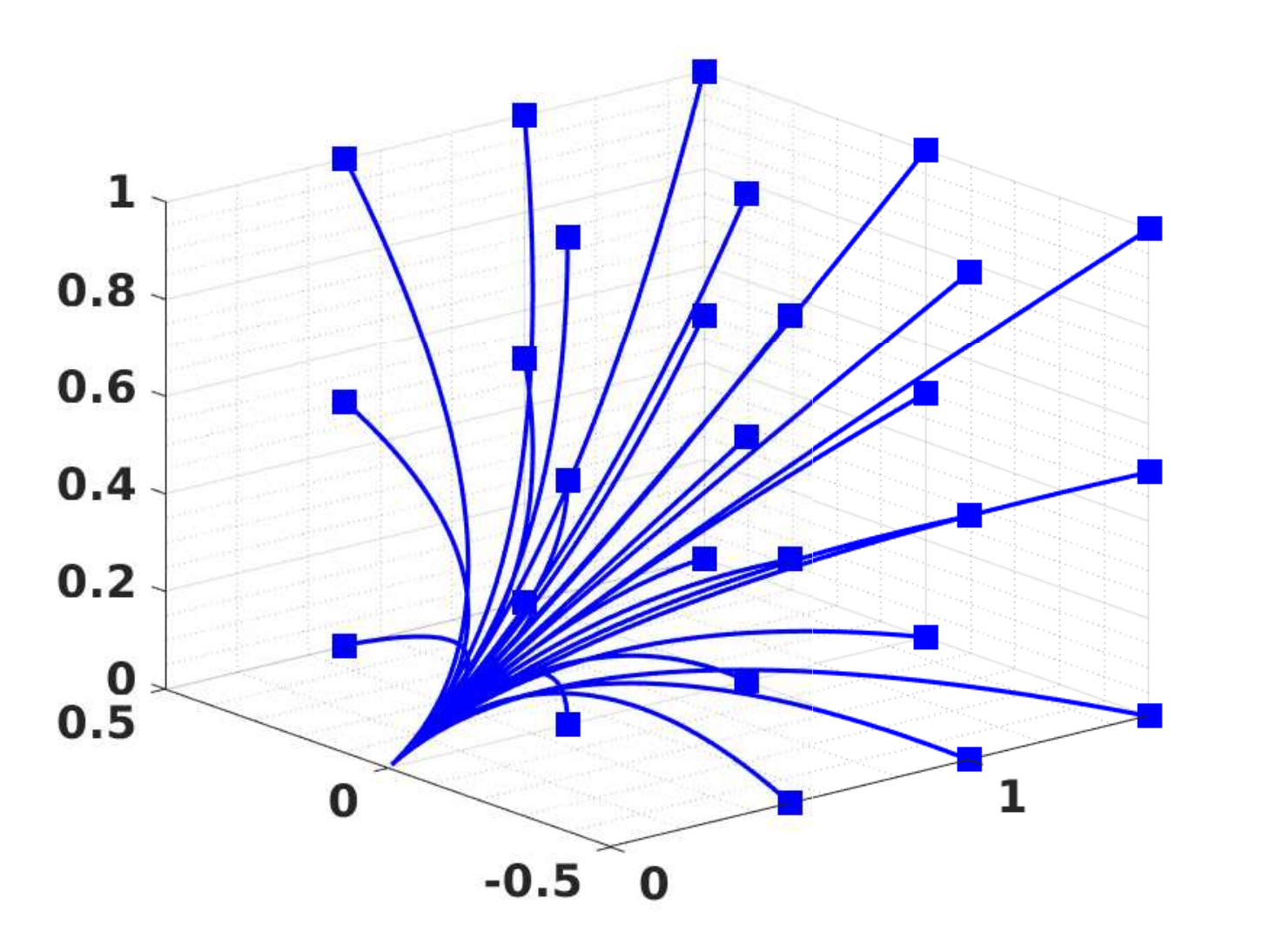}}
\subfigure[]
 	{\includegraphics[width = 0.23\textwidth]{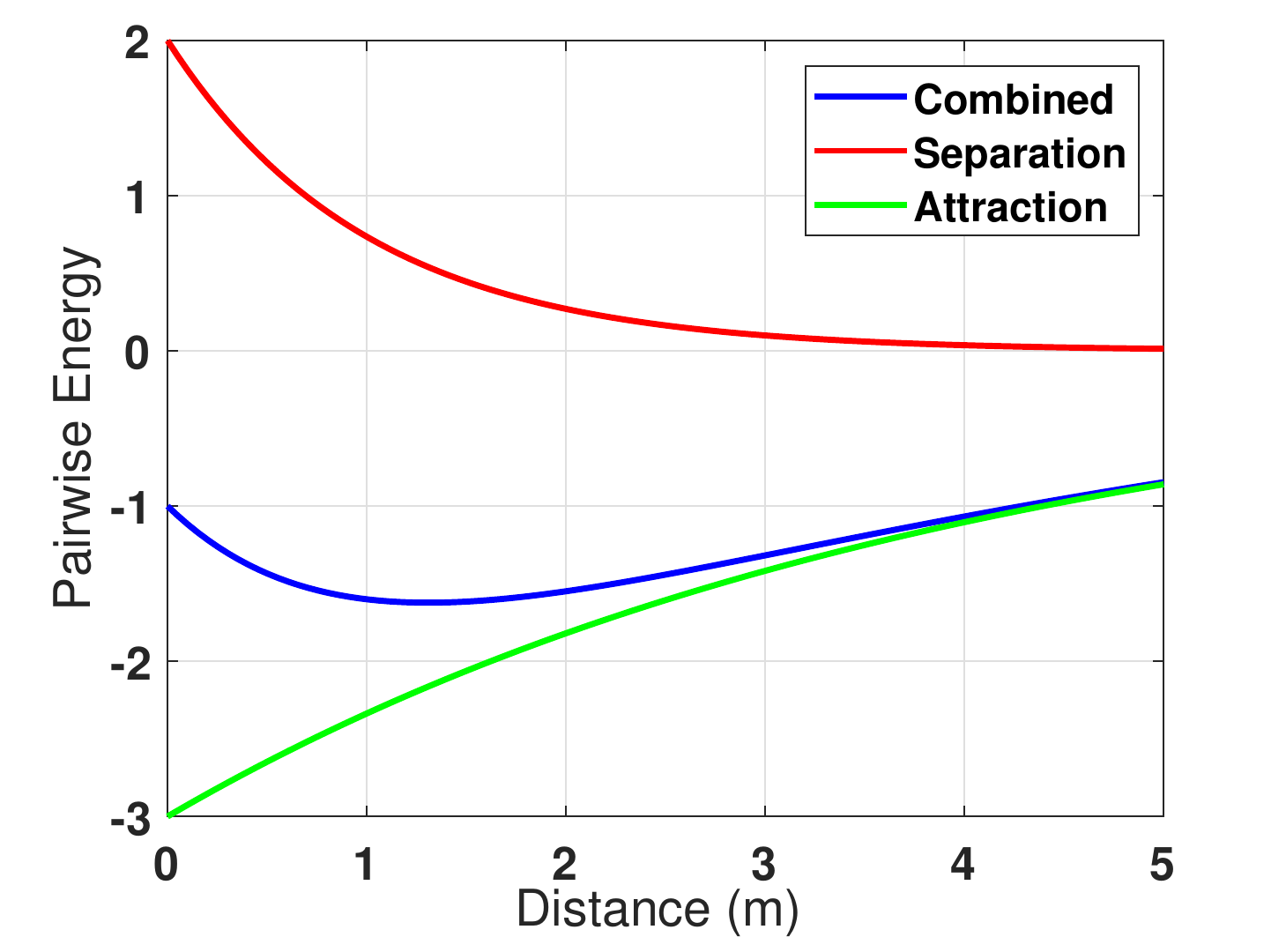}}
\caption{(a). Trajectories constructed for a robot with $vel(x_0)=  [1,0,1]^T$ and $pos(x_0) = [0,0,0]^T$. (b) Morse potential energy function for $k_r =1$, $b = 2$, $a = 3$, $k_a=4$.}
\label{fig:energies}
\vspace{-0.5cm}
\end{figure}

\section{Approach}
In this work, we consider a neighborhood for each robot in the swarm, following the topological swarming model proposed in \cite{ballerini2008interaction}. 
We assume that the robots are capable of periodically obtaining the state of its neighbors, with a planning horizon $T_H$.

\subsection{Robot Swarming Model}
The number of neighbors in one's interaction range directly affects the overall behavior of the swarm \cite{shang2014influence}; however, practical communication limitations hinder expanding the size of the neighborhood to the swarm width. 
The local neighborhood of any robot $i$ in our work consists of its $k$ nearest neighbors. 
We refer to $i$ as the \textit{root} of the neighborhood.
In addition to inter-agent relationships, we substantiate context-specific information to the swarming model in terms of external perturbations (\textit{e.g.}, obstacles, roost, and virtual boundaries).  
We discuss the quantification of such relationships within a swarm and represent them using energy functions. 
In order to represent different interactions within a swarm, we render two types of energy functions: \textit{pairwise} (for robot-robot interactions) and \textit{unary} (for external perturbations).

%

\subsubsection{Interaction Energy}

The interaction energy combines the cohesion and separation rules to define a pairwise energy function $\psi_p$ that has a minimum energy at a characteristic length-scale (Fig. \ref{fig:energies}[b]).
Our formulation is similar to the \textit{Morse potential}, which is a widely employed energy function in swarm and statistical physics domains \cite{gazi2013lagrangian} \cite{carrillo2013new}, to model the interactions among SPPs.
For any two neighboring robots, $i$ and $j$, with states $x_i$ and $x_j$, we define $\psi_p$ as
\begin{equation}
    \psi_p(x_i, x_j) = -a\exp{(-{\frac{\mathrm{d}_{ij}}{k_a}})} + b\exp{(-{\frac{\mathrm{d}_{ij}}{k_r}})},
    \label{eq:pairwise}
\end{equation}
where $\mathrm{d}_{ij} = \norm{pos(x_i) - pos(x_j)}$, $k_a,k_r > 0$, $k_a > k_r$, $\frac{b k_r}{ak_a} < 1$. We define the two terms in Eq. \eqref{eq:pairwise} as the \textit{attraction} and the \textit{separation} kernels, respectively. 
The defining characteristics of the Morse potential function depend on constraining the parameters $a$,~$b$,~$k_a$, and $k_r$ as mentioned.
Fig. \ref{fig:energies}(b) shows the change of interaction energy with respect to the distance between two robots. 
Therefore, in the absence of external perturbations, we expect the robots to converge into a consensus formation with the average distance between the robots approximately equal to the local-minima of $\psi_p$.

\subsubsection{Roosting Energy}
We incorporate perturbations that solely depend on the state of an individual agent and sources that are external to the swarm as a unary energy function, $\psi_u$.
The main purpose of the roosting energy function is to constrain the robots' flocking to a geographical area of interest (\textit{roost}).
In real-world applications, one may design unary functions to represent an area with obstacles: a high-security zone in an urban environment, possibly demarcated by an equipotential line.
Assuming that the only external perturbation in an obstacle-free environment is caused by their preference toward the roost, we define the roosting energy of a robot $i$ as a function of the position $pos(x_i)$ and a \textit{roosting center } $C_R \in \mathbb{R}^3$. Therefore,
\begin{equation}
    \psi_{u}(x_i) = \exp\{-{\frac{\norm{pos(x_i) - C_R}}{k_R}}\},
    \label{unary}
\end{equation} 
where $k_R>0$ and $C_R$ is a centroid of a known region in the environment, to which we refer as the roost.

\subsection{Search Space}
Planning motion in robots using a set of deterministic control actions is a well established concept and used in the literature \cite{liu2017planning} \cite{pivtoraiko2009differentially}.
Typically, the discretized action space is combined with the robot's state space using a state-action graph called \textit{state lattice}, which is later coupled with a graph search to generate motion plans for robots.
However, in multi-robot settings, the dimensionality of such state lattice representations tends to explode with the number of robots, possible actions, and states, making such methods computationally intractable in real time. 
Thus, we limit the search space in our work to a set of finite trajectories defined for a single planning horizon. By limiting the horizon length $T_H \to \delta t$, we achieve reactive and real-time behavior in robots.
The cost of each trajectory is evaluated with respect to the energy functions ($\psi_p$ and $\psi_u$) and the state of each trajectory at the end of the planning horizon, $\delta t$ time.

First, we define a set of deterministic control actions based on the physical limitations of the robots. Let $u_{max}^\textbf{d} \in \mathbb{R}$ be the maximum possible control input for a robot over the axis $\textbf{d}$. Let $\textbf{d}_u$ be a discretization step. Hence, we define $\mu_d = (2u_{max} / \textbf{d}_u + 1)$ number of control actions along dimension $\textbf{d}$, within $[-u_{max}^\textbf{d}, u_{max}^\textbf{d}]$. 
By combining the control actions along the 3 dimensions, we obtain the \textit{control action space} for any robot as $\mathcal{U}$ where the cardinality of $\mathcal{U}$ is $\mu_X\mu_Y\mu_Z$. Note that $\mathcal{U}$ is constant throughout the navigation. 
By exploiting the trajectory generation method presented in the preliminaries, we construct time-parameterized trajectories $l(t)$, for each control input $u \in \mathcal{U}$ and the robot state $x$. Without a loss of generalization, we extend the formulation in Eq. \eqref{eq:polynomial} to be a three-dimensional trajectory for any $u \in \mathbb{R}^3$ and $pos(x) \in \mathbb{R}^3$. For each $l(t) \in \mathbb{R}^3$ and $n=2$, where control inputs lie in the acceleration domain,


\begin{equation}
    l(t) = u \frac{t^2}{2} + acc(x)\frac{t^2}{2} + vel(x)t + pos(x).
    \label{eq:l}
\end{equation}

Therefore, for any robot $i$, we define a search space $\mathcal{L}_i$, such that $\mathcal{L}_i(\mathcal{U},x(t),t) = \{l_{i_1}(t),l_{i_2}(t), \dots \}$. Considering the differentially flat dynamics of aerial drones, we track any trajectory $l(t)$ along each dimension independently.

\subsection{Markov Random Field Construction}
Following the topological interaction rule, where each robot communicates with $k$ nearest neighbors, we construct a set of local neighborhoods. 
At the beginning of a planning horizon, each robot generates a complete (fully-connected) MRF corresponding to its local neighborhood.
The completeness of the model helps us to embed the local information carried by each neighbor into the inference process. 
Formally, let $X = \{ X_1, X_2, \dots\}$ be the random variables of MRF, where $X_i$ is associated with the $i$-th neighbor. Further, $Domain(X_i) = \mathcal{L}_i(\mathcal{U},x_i(t),t)$. It is important to note that the graphical model and the domain of the random variables are instances of a time-varying spatial distribution as the robots move.

Let $G$ be the underlying graph of the MRF with a set of vertices $V$ and edges $\mathcal{E}$ where the the cardinality of $V$ is the number of robots in the local neighborhood.

\begin{definition}
A clique $c$ of a graph $G$ is any complete subgraph of $G$. 
\end{definition}

Let $C$ be a clique factorization of $G$ that consists of singly, -- $c_u$, and doubly, -- $c_p$, connected cliques. Therefore, $c_u, c_p \in C$. By associating previously defined pairwise and unary potential functions with each element in $C$, we define a Gibbs energy function for the local neighborhoods. Let $\Phi$ be the factorization of $X$ defined over $C$, where $\Phi_p$ and $\Phi_u$ are subsets of $\Phi$.
For any $\phi_u \in \Phi_u$ and $\phi_p \in \Phi_p$, using the log linear form,
\begin{subequations}
\begin{equation}
    \phi_u(X_i) = \exp{-\psi_u(x'_i)},
\end{equation}
\begin{equation}
    \phi_p(X_i, X_j) = \exp{-\psi_p(x'_i, x'_j)},
\end{equation}
\label{eq:ll_form}
\end{subequations}
where $X_i,X_j$ are two random variables in the MRF associated with robots $i,j$ and $x'_i$ is the state of robot $i$ after a horizon length $\delta t$.  Formally, $x'_i = [l_i(t+\delta t),\dot{l}_i(t+\delta t), \ddot{l}_i(t+\delta t), \dots]^T$ for any $l_i \in \mathcal{L}_i$.



\begin{theorem}
The probability of a local neighborhood forming in an environment can be parameterized by a set of pairwise and unary energy functions.
\end{theorem}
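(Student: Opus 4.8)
The plan is to assemble the result directly from the Gibbs-distribution machinery of the preliminaries rather than to prove a new bound: the statement is a construction, so it suffices to exhibit the parameterization explicitly and verify that it defines a valid distribution. Since a communicating neighborhood is posited to constitute an MRF over the complete graph $G$, the Hammersley--Clifford theorem guarantees that any strictly positive distribution respecting the Markov structure of $G$ factorizes as a product of potentials over the cliques of $G$. First I would take the clique factorization $C$ of $G$ and retain only its singly-connected cliques $c_u$ and doubly-connected cliques $c_p$; this is precisely the factorization $\Phi = \Phi_u \cup \Phi_p$ introduced above, with every higher-order clique potential set to unity.

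Next I would instantiate the joint distribution of Eq.~\eqref{eq:ExpP} over this factorization. Taking the product of the unary factors $\phi_u(X_i)$ over the vertices and the pairwise factors $\phi_p(X_i,X_j)$ over the edges, and substituting the log-linear forms of Eq.~\eqref{eq:ll_form}, the unnormalized distribution becomes $\tilde{P}_\Phi(X) = \prod_i \exp\{-\psi_u(x'_i)\}\prod_{i<j}\exp\{-\psi_p(x'_i,x'_j)\}$. Collecting the exponents yields a single Gibbs energy, so that $P(X) = \frac{1}{Z}\exp\{-\sum_i \psi_u(x'_i) - \sum_{i<j}\psi_p(x'_i,x'_j)\}$ with $Z$ the partition function of Eq.~\eqref{eq:jointP}. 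This exhibits the neighborhood probability as a function whose only free ingredients are the pairwise energy $\psi_p$ of Eq.~\eqref{eq:pairwise} and the unary energy $\psi_u$ of Eq.~\eqref{unary}, which is the claim.

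Two points need care. Positivity and normalizability come for free: each factor is the exponential of a finite energy and hence strictly positive, so (as noted after Eq.~\eqref{eq:ExpP}) the resulting distribution is a valid, everywhere-positive Gibbs distribution, and since each domain $\mathcal{L}_i$ is finite the sum defining $Z$ is a finite sum and well defined. The step I expect to be the main obstacle is justifying the truncation of $C$ to unary and pairwise cliques, i.e.\ arguing that no higher-order potentials are needed. I would discharge this by appealing to the self-propelled-particle model underlying $\psi_p$: the physical couplings in the swarm are inherently pairwise (Morse-type) together with single-agent preferences toward the roost, so the neighborhood energy admits an exact additive decomposition into unary and pairwise terms and the remaining clique potentials may be set to unity without loss of generality.
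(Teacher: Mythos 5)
Your proposal is correct and follows essentially the same route as the paper: substitute the log-linear unary and pairwise factors of Eq.~\eqref{eq:ll_form} into the product-form Gibbs distribution and collect the exponents into a single neighborhood energy, arriving at exactly the paper's Eq.~\eqref{eq:gibbs_prob}. The only divergence is cosmetic --- the paper sidesteps your Hammersley--Clifford/truncation concern by simply \emph{defining} the clique factorization $C$ to contain only the singly and doubly connected cliques, so no argument that higher-order potentials may be set to unity is ever needed.
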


\begin{proof}
Using the definition of Gibbs distribution in product form Eq. \eqref{eq:jointP} for the MRF defined over a local neighborhood, 
\begin{equation}
    P_{\Phi}(X) \propto \prod_{\substack{\phi_u \in \Phi_u}}\phi_u (c_u) \prod_{\substack{ \phi_p \in \Phi_p}}\phi_p(c_p).
\end{equation}
Substituting from the log linear form of factor potentials as in \eqref{eq:ll_form},
\begin{equation}
P_{\Phi}(X) \propto \exp\{-\sum_{c_u} \psi_u(c_u) -\sum_{c_p} \psi_p(c_p) \}.
\end{equation}
However, $c_u $ and $  c_p$ are unary and pairwise cliques of $G$, and they consist of vertices and edges of the original graph. Since $G$ is fully connected, we write the above with associated random variables as
\begin{equation}
    P_{\Phi}(X) \propto \exp\{-\sum_{i \leq N} \psi_u(x_i) - \sum_{\substack {i\leq N\\j<i}} \psi_p(x_i,x_j) \}.
    \label{eq:gibbs_prob}
\end{equation}
The summation of $\psi_u$ and $\psi_p$ inside the exponential function produces the \textit{neighborhood energy}. 
It can be seen that the neighborhood energy changes with the value assigned to each random variable.
Each trajectory assignment to the random variables postulates a different robotic formation at the end of the particular time horizon.
As per the formulation of Eq.\eqref{eq:gibbs_prob}, higher neighborhood energies correspond to lower probabilities for the formation. Thus, we can use a Gibbs distribution parameterized by unary and pairwise factors to define the probability of each possible robot formation. \end{proof}
Furthermore, any trajectory distribution over the MRF is analogous to the root robot's predictions on the neighbors' future actions given the local information. 

\subsection{Mean Field Approximation with Velocity Alignment}
Given a fully connected MRF defined over a local neighborhood, we approximate the posterior distribution to obtain the best trajectory assignment that minimizes the neighborhood energy.
As the search space and the neighborhood grow in size, the computational cost of exact inference methods increases by magnitudes and becomes intractable in real-time.
Therefore, we propose using inexact yet fast MFA in this work. 
Further, we incorporate the velocity alignment property into the update rule as a compatibility function.

Briefly, we seek an approximate distribution $Q(X) = \prod_i Q_i(X_i)$ that minimizes the Kullback-Leibler (KL) divergence with the true posterior distribution $P(X)$. 
One prevailing assumption in MFA is the independence of the random variables in the true posterior. 
However, as each planning horizon results in new posteriors, any disparities between the true and the approximated distributions will not be propagated into the future.
By restricting the approximating posterior $Q(X)$ to a valid probability distribution and a product of the independent marginal distributions, we can obtain the following mean-field update rule.
Due to the lengthy derivation of the update rule, we direct any interested readers to the Inference section of \cite{koller2009probabilistic}. 
A disposition of the derivation can also be found in the supplementary material for \cite{krahenbuhl2012efficient}.

\begin{equation*}
    D(Q||P_{\Phi}) = \mathbb{E}_{\mathcal{U} \sim Q} [\ln\frac{Q(\mathcal{U})}{P_{\Phi}(\mathcal{U})}],
\end{equation*}




\begin{equation*}
  F(\tilde{P}_{\Phi},Q) = \sum_{\phi \in \Phi}\mathbb{E}_Q[\ln \phi] + \mathbb{H}_Q(\mathcal{U})  
\end{equation*}

\begin{subequations}
  \begin{alignat*}{2}
    \mathrm{Find} \quad & \{Q_i(\mathcal{U}_i)\}, \\
    \mathrm{Maximizing } \quad & F(\tilde{P}_{\Phi},Q) =  \sum_{\phi \in \Phi}\mathbb{E}_Q[\ln \phi] + \mathbb{H}_Q(\mathcal{U}),  \\
    \mathrm{Subject\,to} \quad &Q(\mathcal{U}_i) = \prod_i{Q_i(\mathcal{U}_i)}, \notag \\
    & \sum_i{Q_i(\mathcal{U}_i)} = 1.
    \nonumber
  \end{alignat*}
\end{subequations}

\begin{equation}
    Q_i(u_i) = 
    \frac{1}{Z_i}\exp \{-\psi_u(\dot{x}_i) -\sum_{u_j \in \mathcal{U}_j}\mu(\dot{x}_i,\dot{x}_j)\sum_{j \neq i}  \tilde{Q}_j(u_j)\},
        \nonumber
\end{equation}
\begin{equation}
\tilde{Q}_j(u_j) = Q_j(u_j)\psi_p(\dot{x}_i,\dot{x}_j).
\end{equation}
\label{eq:update}
In order to incorporate the velocity alignment behavioral rule, we define $\mu(x_i', x_j')$ as a \textit{velocity compatibility} function that measures the distance between velocity components of $i $ and $j$ robots' trajectory assignments. 
Therefore,
\begin{equation*}
    \mu(x_i', x_j') = \|vel(x_i') - vel(x_j'))\|.
\end{equation*}

Firstly, we calculate the expected pairwise energy of a trajectory assignment $l_i$ with respect to all the other robots $\sum_{j \neq i}  \tilde{Q}_j(l_j)$.  Secondly, the velocity compatibility function penalizes the expected energy caused by the dissimilarity of the trajectories' velocities. This increases the probability that the robots will have trajectories with likelier velocity components at the end of the time horizon.
We consider the sum of the robots' unary energy $\psi_u(x_i')$ and the weighted, expected pairwise energy as the total Gibbs energy of a given trajectory $l_i$. Finally, we calculate the probability of each trajectory assignment $Q_i(l_i)$ with Gibbs distribution Eq. \eqref{eq:gibbs_prob}. 

\begin{algorithm}
\SetAlgoLined
 Initialization\\
\For{$i \leftarrow 1 \dots r \dots N$} {
 $\mathcal{L}_i \leftarrow \mathcal{L}_i(\mathcal{U},x_i,t)$ \\
 $Q_i(l_i) \leftarrow \frac{1}{Z_i} \exp \{-\psi_u(\dot{x}_i)\}$ \\
}
Approximation \\
 \While{$Q_{old}(\mathcal{L}) \neq Q(\mathcal{L})$}{
  $Q_{old}(\mathcal{L}) \leftarrow Q(\mathcal{L})$ \\
  $\tilde{Q}_i(l_i) \leftarrow \sum_{j \neq i} Q_j(l_j) \psi_p(\dot{x}_i, \dot{x}_j)$ \\
  $\hat{Q}_i(l_i) \leftarrow \sum_{l_j \in \mathcal{L}_j} \mu(\dot{x}_i, \dot{x}_j)\tilde{Q}_i(l_i)$ \\
  $Q_i(l_i)  \leftarrow \frac{1}{Z_i} \exp \{-\psi_u(\dot{x}_i) -\hat{Q}_i(l_i) \} $ \\
 }
 Execute $\argmax_{l_r} Q_r(\mathcal{L}_r)$ on robot $r$ \\
 \caption{Flocking algorithm for robot $r$}
 \label{algo}
\end{algorithm}
Algorithm \ref{algo} summarizes the key steps of our method into an iterative algorithm based on MFA. Here, we denote the root robot of the particular local neighborhood as $r$ and the size of the neighborhood as $N$. The algorithm runs on each robot $r$ in the swarm simultaneously. The initialization step computes the search space $\mathcal{L}_i$ for each robot in the neighborhood and their probability distributions based on the unary energies. We iterate the approximation step until the target probability distribution of each robot in the neighborhood converges. Finally, the root robot may execute the trajectory that maximizes the approximated marginal probability distribution of the corresponding RV.

\section{Experiments and Results}

\begin{figure}[]
    \centering
 	{\includegraphics[width = 0.48\textwidth,trim={1cm 1cm 1cm 0.5cm}, clip]{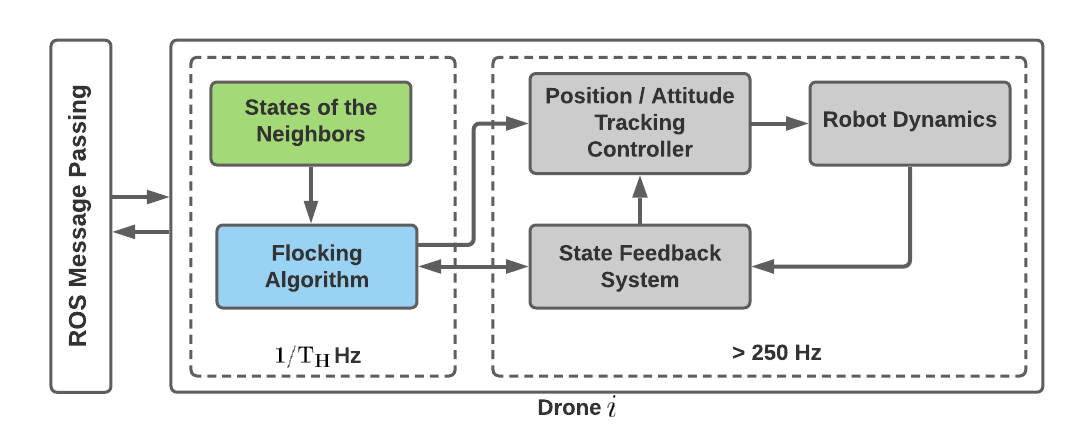}}
    \caption{Overall design of the system. The components related to the robot dynamics and control (grey) runs over 250Hz. The flocking algorithm and the communication component run at a much slower rate.}
     \label{fig:system}
     \vspace{-0.5cm}
\end{figure}

 \begin{figure*}[ht] \vspace{-10pt}
  \centering
  \subfigure[]
 	{\includegraphics[width=0.18\textwidth,trim={5cm 1cm 1cm 2cm}, clip]{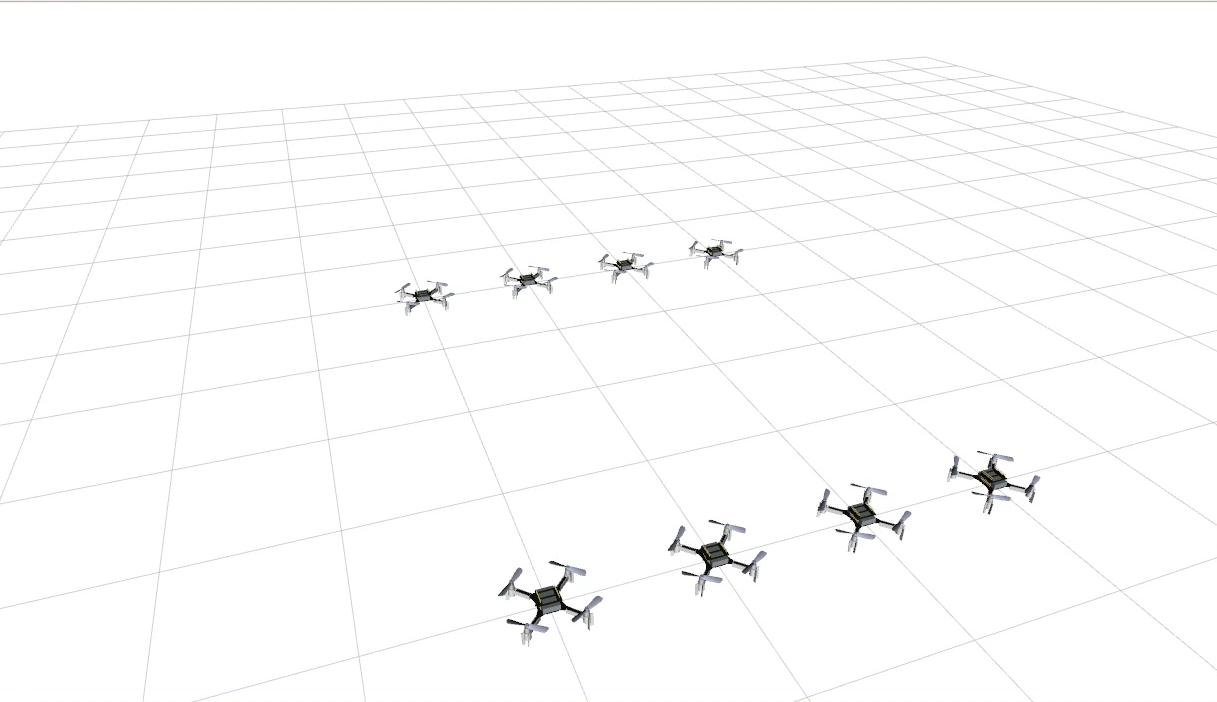}}
    \hfill
\subfigure[]
 	{\includegraphics[width=0.18\textwidth,trim={5cm 1cm 1cm 2cm}, clip]{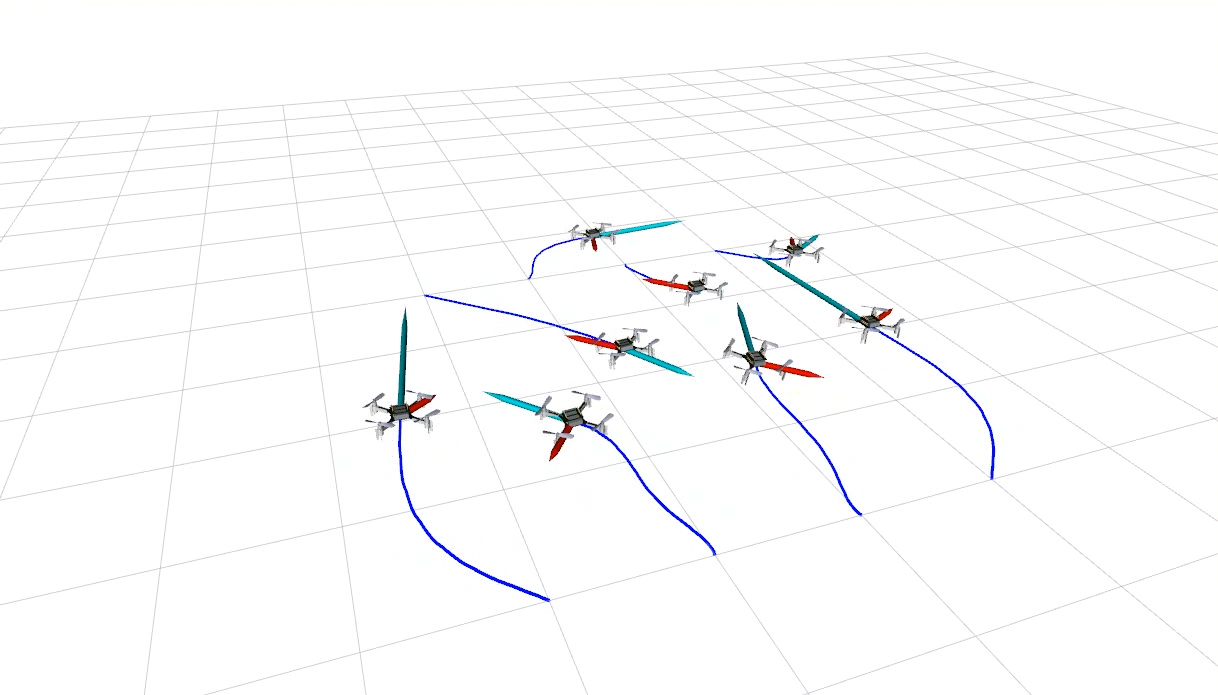}}
 	\hfill
\subfigure[]
 	{\includegraphics[width=0.18\textwidth,trim={4cm 2.5cm 4cm 2cm}, clip]{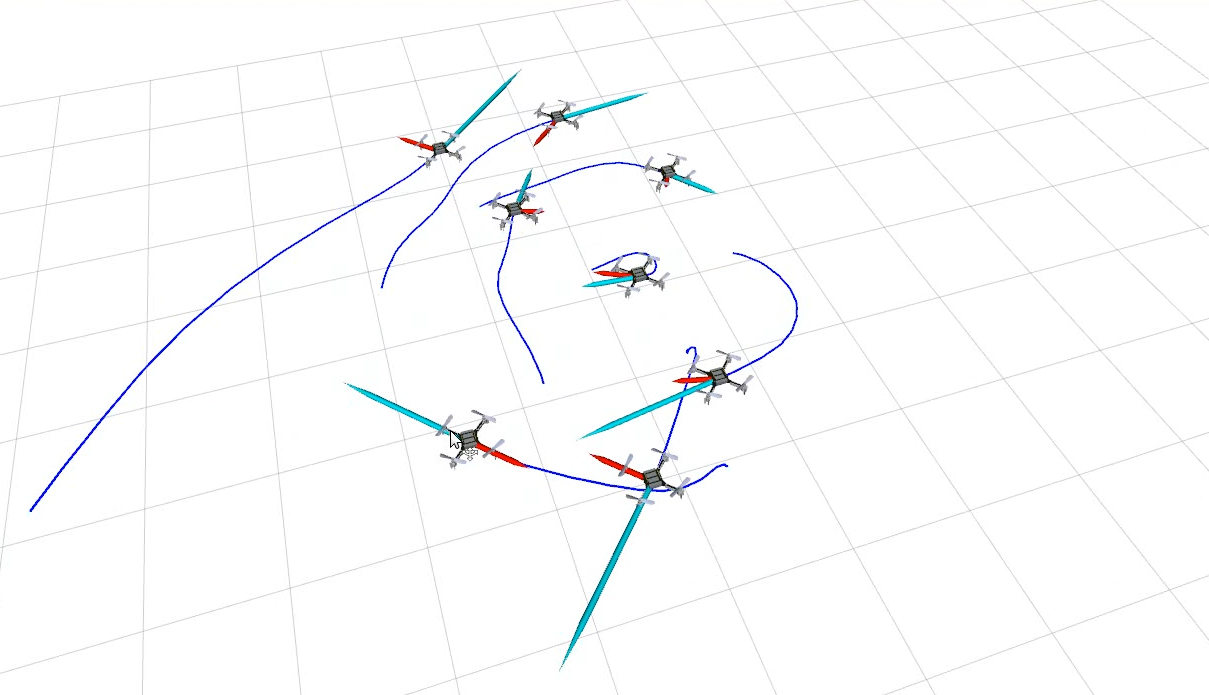}}
 \hfill
 \subfigure[]
 {\includegraphics[width=0.18\textwidth, trim={4cm 1cm 4cm 2cm}, clip]{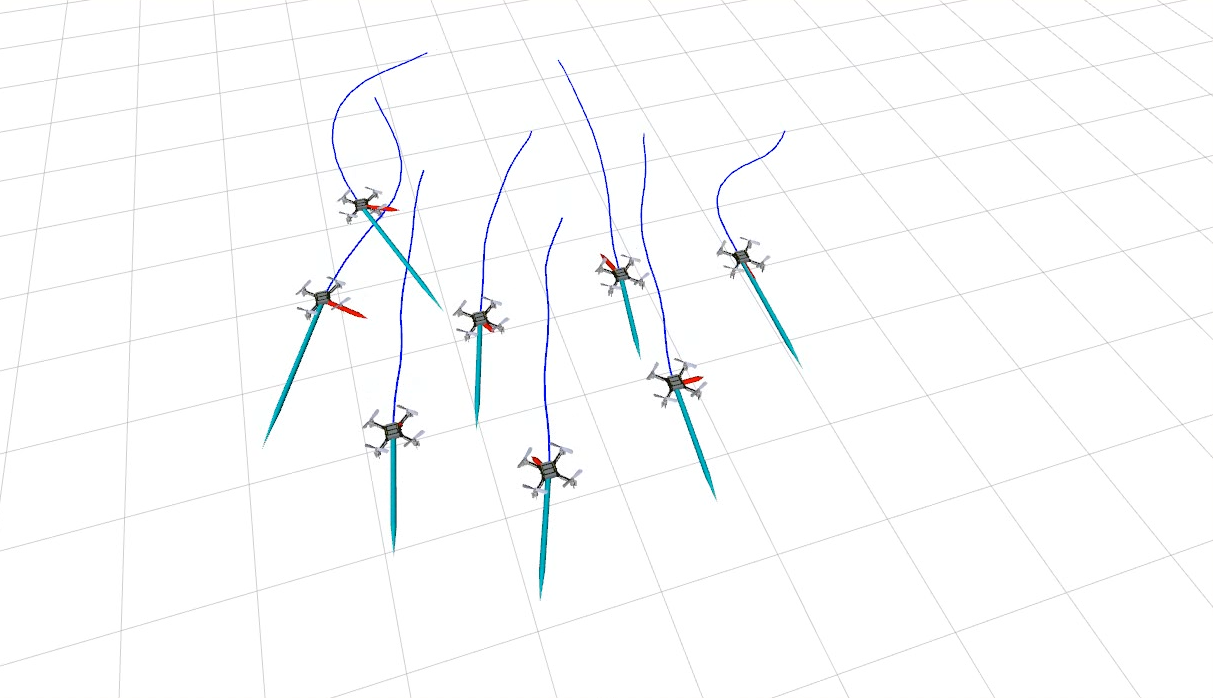}}
\hfill
 \subfigure[]
 {\includegraphics[width=0.18\textwidth, trim={4cm 1cm 5cm 2cm}, clip]{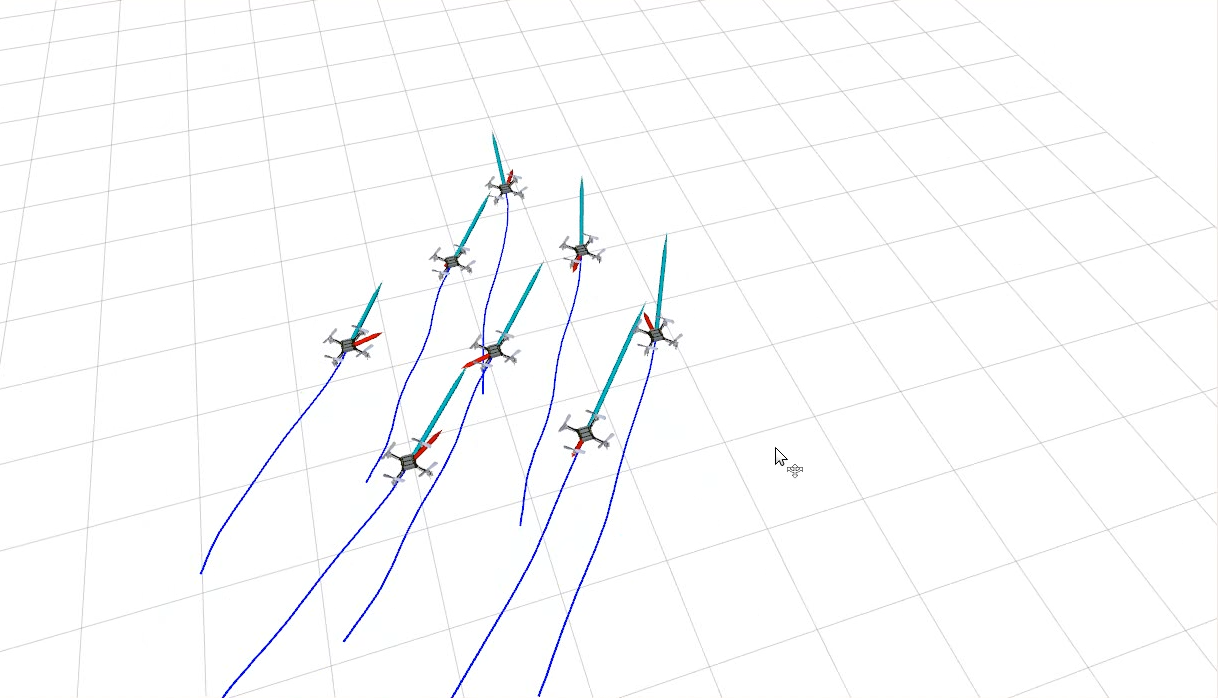}}
 
\caption{Snapshots of a swarm of drones at time (a) 0s, (b) 10s, (c) 20s, (d) 30s, (e) 40s, using $k = 3$. For ease of visualization, the drones are controlled on a 2-dimensional plane of $20m \times 20m$. The blue lines show the trajectory trails of the robots for the last 5s. The red and cyan arrows depict the acceleration inputs and the current velocity of the robots, respectively. The lengths of the arrows are proportional to the size of the vectors in each dimension. }
\label{snapshots}
 \vspace{-0.5cm}
\end{figure*}

\begin{figure}[t]
  \centering
\subfigure[]
 	{\includegraphics[width = 0.23\textwidth, trim={0cm 0.5cm 0cm 0.5cm}]{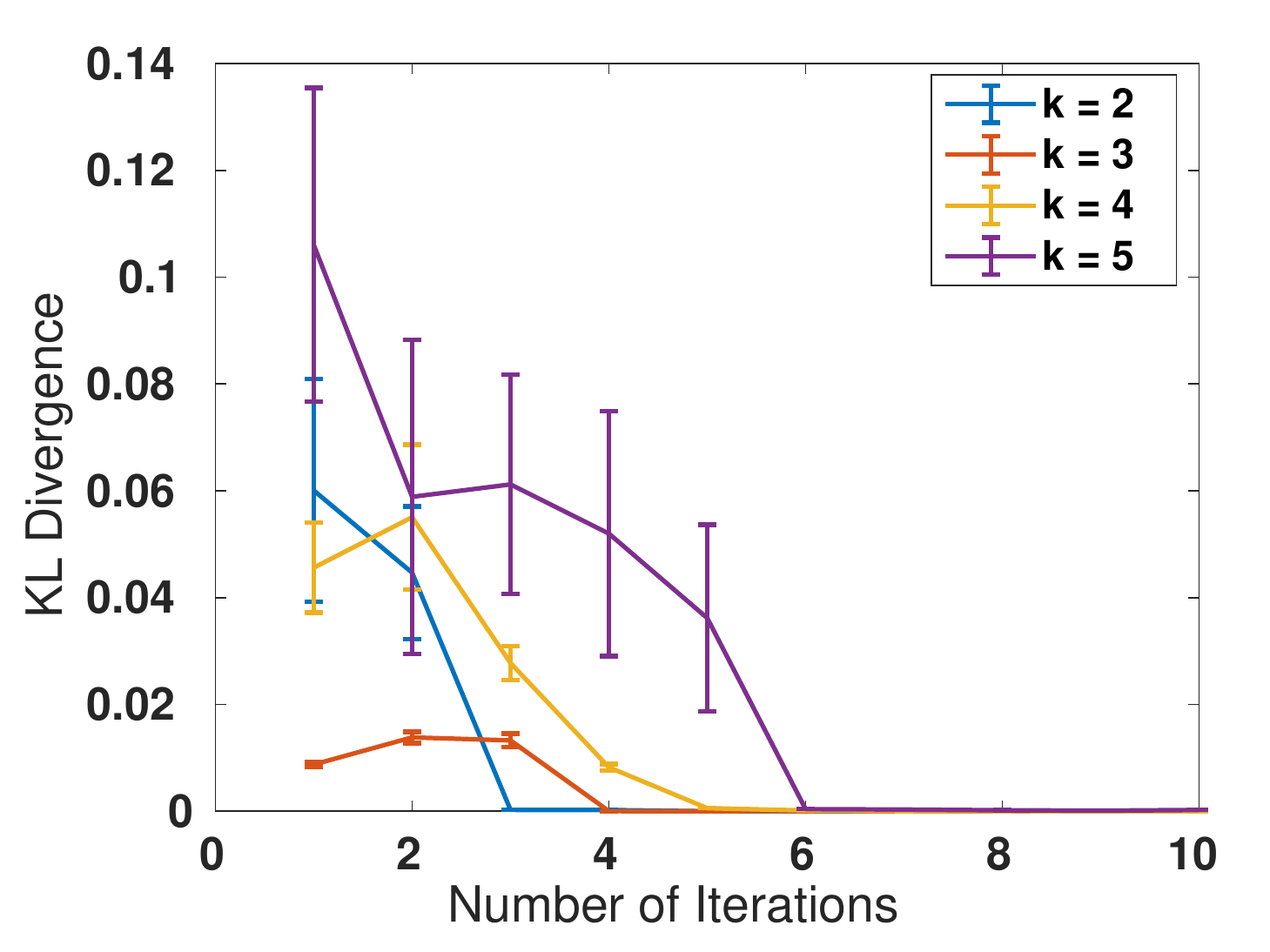}}
\subfigure[]
 	{\includegraphics[width = 0.23\textwidth, trim={0cm 0.5cm 0cm 0.5cm}]{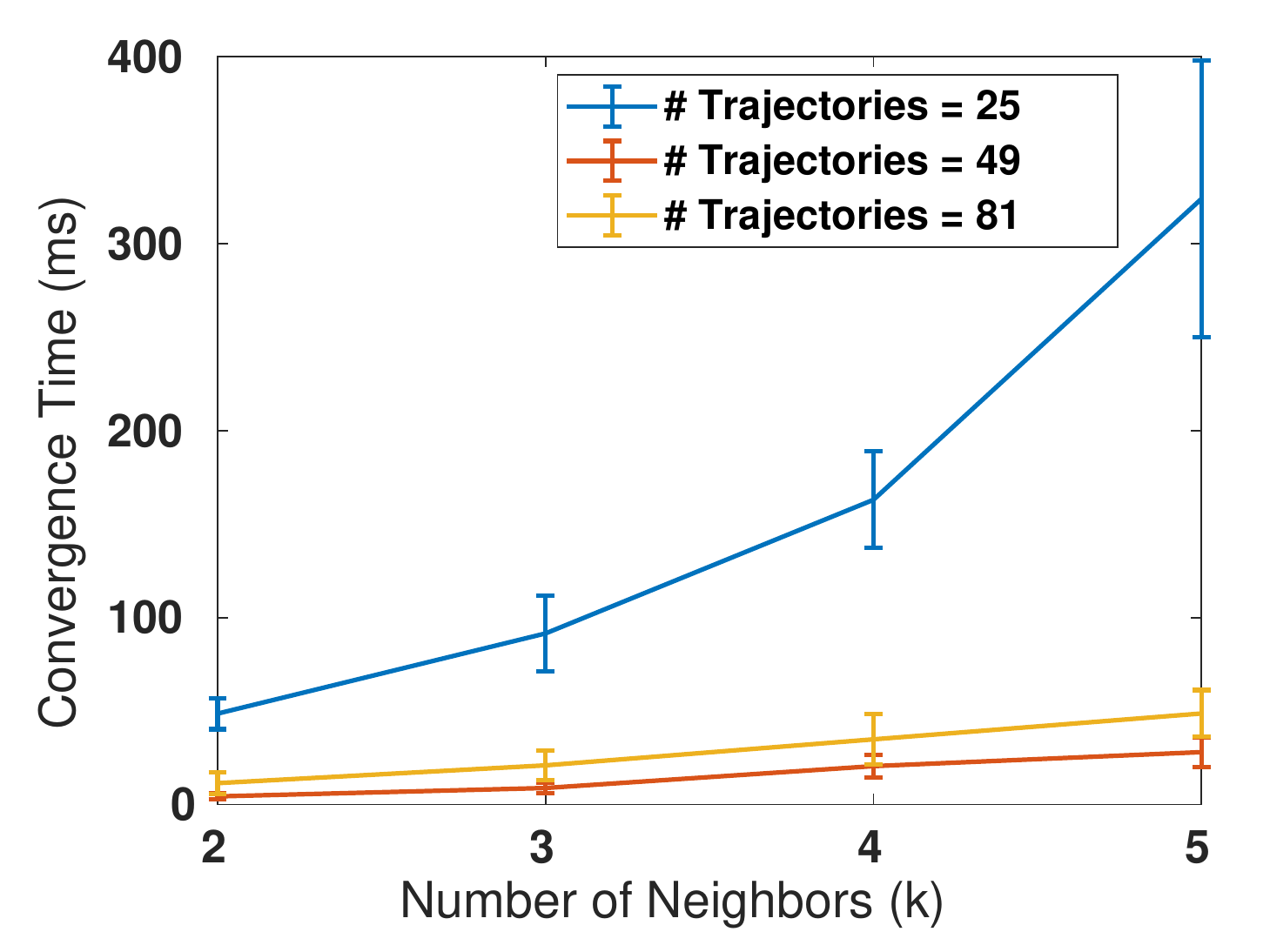}}
\subfigure[]
    {\includegraphics[width = 0.23\textwidth, trim={0cm 0.5cm 0cm 1cm}]{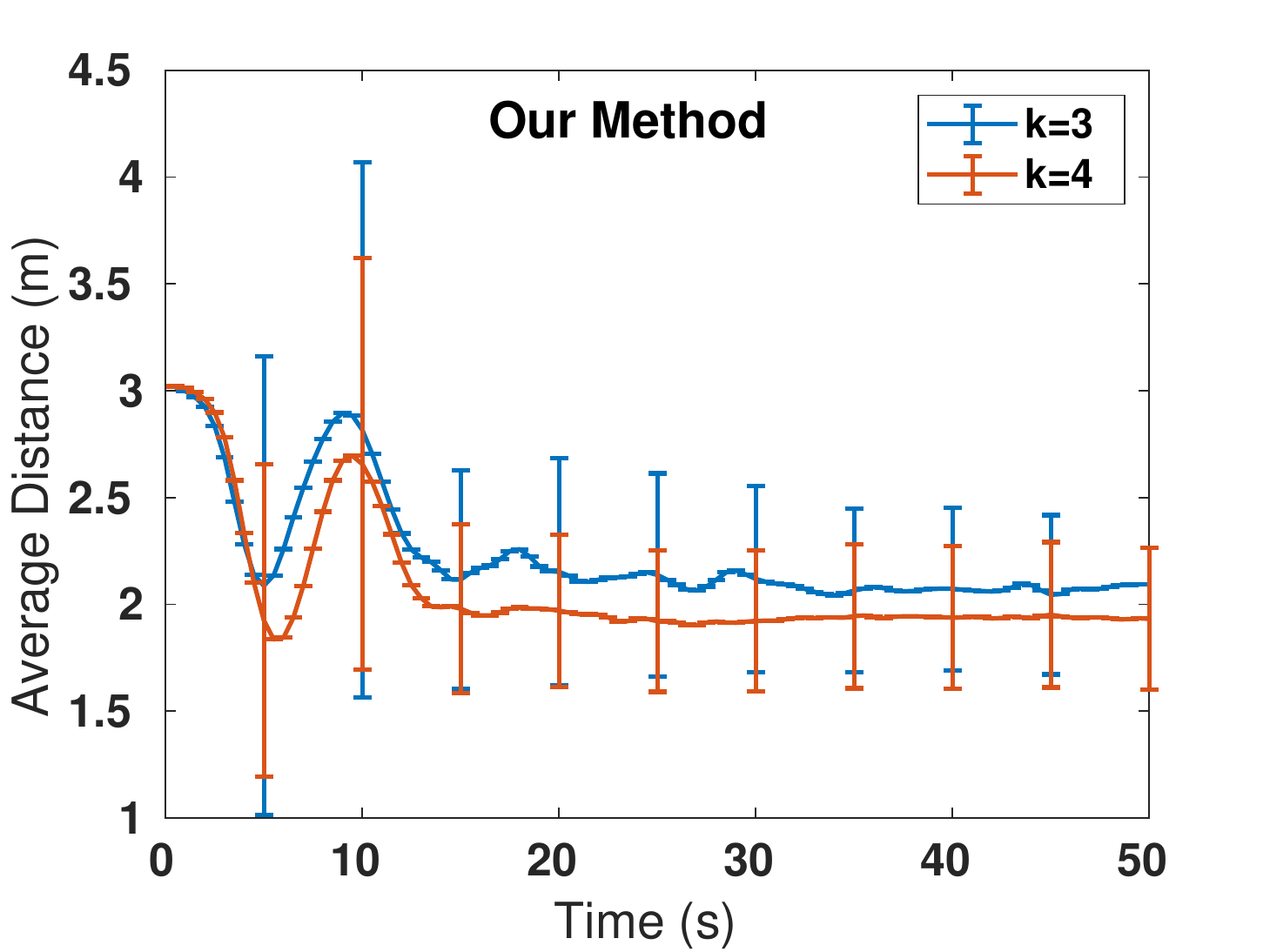}}
\subfigure[]
 	{\includegraphics[width = 0.23\textwidth, trim={0cm 0.5cm 0cm 1cm} ]{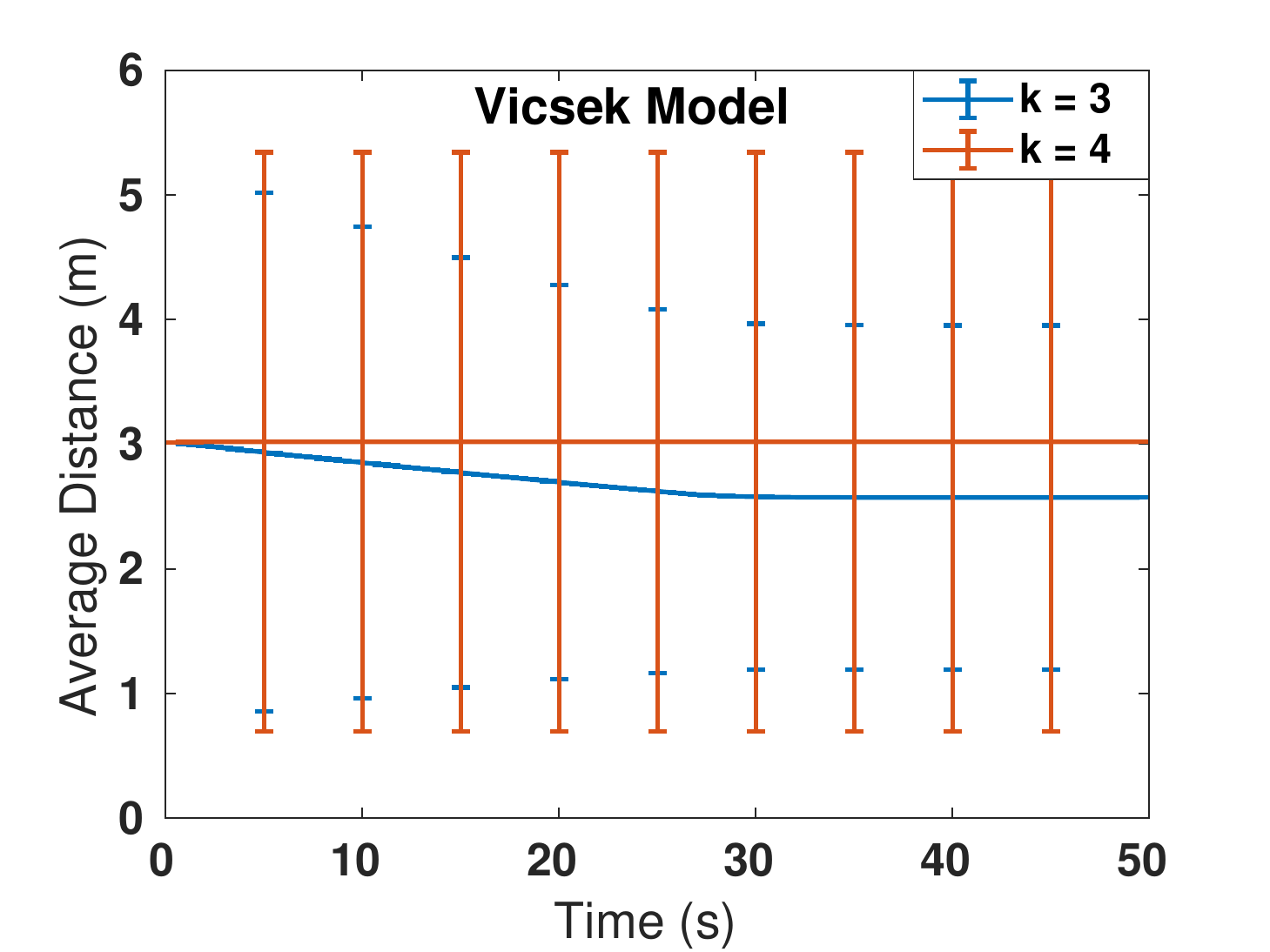}}
 \subfigure[]
    {\includegraphics[width = 0.23\textwidth, trim={0cm 0.5cm 0cm 1cm}]{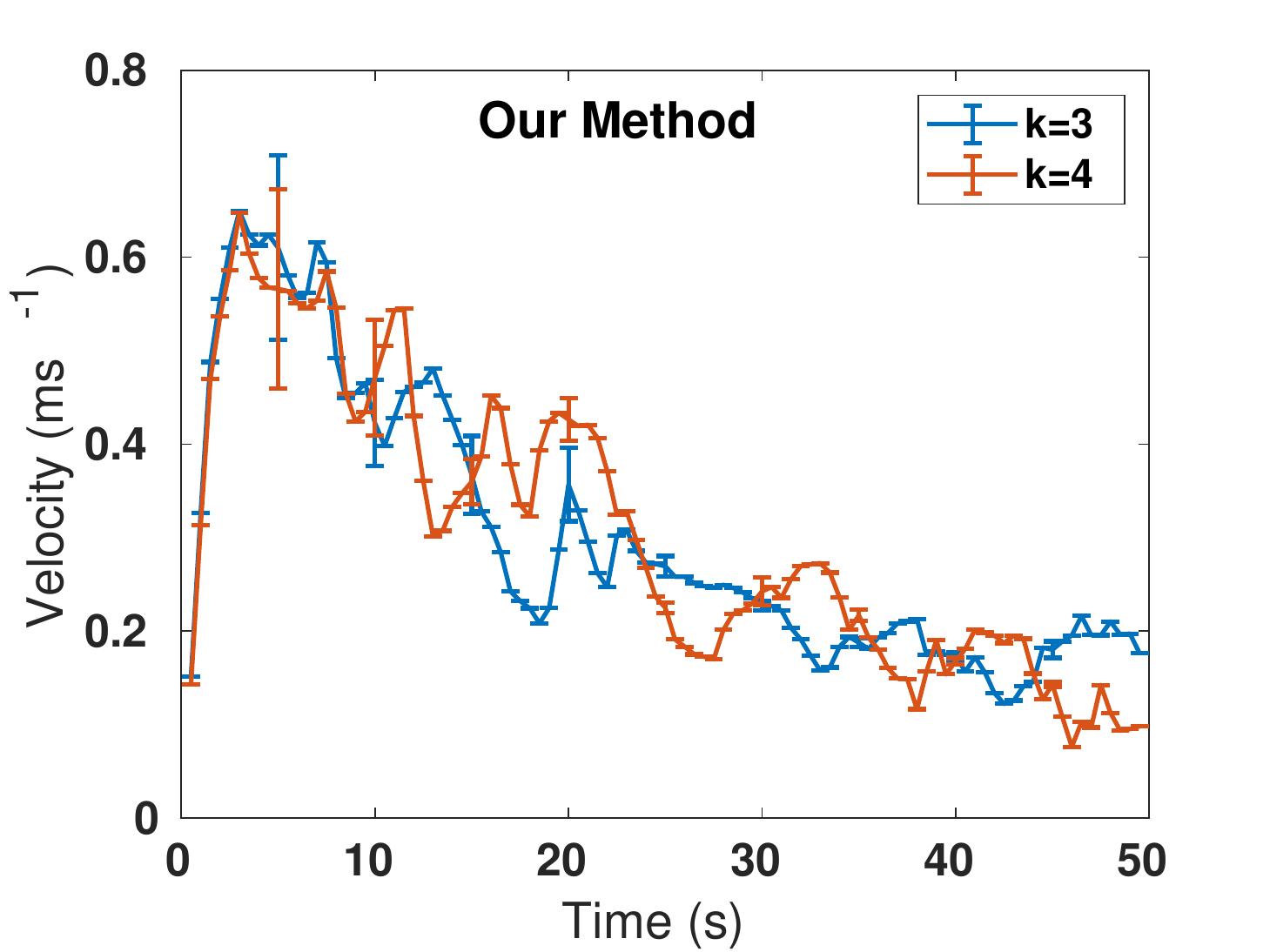}}
\subfigure[]
 	{\includegraphics[width = 0.23\textwidth, trim={0cm 0.5cm 0cm 1cm}]{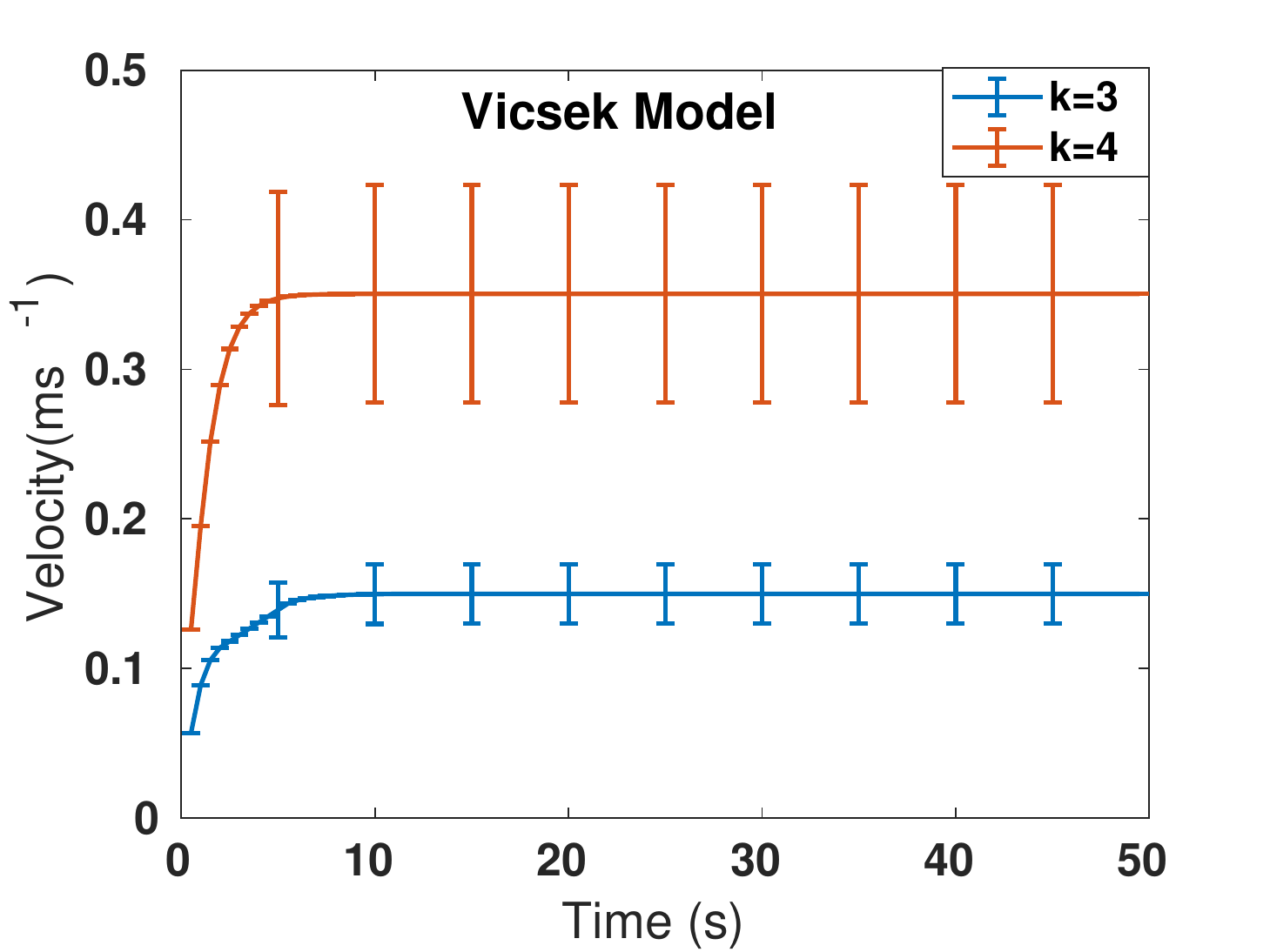}}
\caption{(a). Convergence rate of MFA. (b) Convergence time against the size of the search space and $k$. (c)(d) Average distance between the robots against the time. (e)(f) Average difference of velocities against time. }
\label{fig:convergence}
     \vspace{-0.5cm}
\end{figure}

We experimented using the proposed method with teams of 3-10 quadrotors in physical and simulation environments.
Each robot is armed with the tracking controller presented in \cite{lee2010geometric}, implemented on the Robot Operating System (ROS).
For the hardware, we opted for Crazyflie 2.0 nano-drones with an external motion capture system for localization.
For the experiments, we generated the trajectories with second-order ($n=2$) control inputs, according to Eq. \eqref{eq:l}.
Fig. \ref{fig:system} visualizes the main components of our system and their connections. 
The flocking algorithm and the inter-vehicle communication modules run at a fixed rate of $1/T_H$ and the latter depends on the ROS communication layer. 
For physical experiments, we used radio communication to transfer the resulting control commands to the root robots.
Typically, the internal controllers of the drones run at loop rates higher than $250Hz$ for attitude control; however, we executed the inter-vehicle communication and flocking algorithm modules at a much slower and more realistic $1/T_H = 5Hz$ rate.
To run the system with all the drone nodes and radio communications, we used a Linux system with an Intel $i7$ CPU and 16GB memory.
Further, all the components were implemented in the C/C++ programming language.  
In addition, we used $u_{max} = 1ms^{-2}$ and limited the maximum velocity of the robots to $1ms^{-1}$. 


We assessed the convergence of the flocking algorithm for different search spaces with varying discretization steps, $\textbf{d}_u$. 
Fig. \ref{snapshots}(a) shows the initial, stationary robot distribution used for the experiments.
We modeled the pairwise potential function in Eq. \eqref{eq:pairwise} with $a=5,b=15,k_a = 1.5, k_r=0.5$ and used the origin as the roosting center, $C_R$. 
We observed that changes in these parameters do not significantly affect the convergence rate of the flocking algorithm. 
However, changing the local minimum characteristics of the Morse potential reflects in the distances among the robots in the resulting cohesive aggregation. 

Though individual robot controllers utilize incomplete information about the global state of the swarm, our algorithm achieved a formation and velocity consensus as in Figs. \ref{snapshots}(d)-(e). 
Fig. \ref{fig:convergence}(a) shows the convergence rate of the flocking algorithm against the neighborhood size. We used KL divergence to measure the distance between two adjacent target distributions ($Q_{old}(X), Q(X)$) in the approximation step.
For a given planning horizon, the algorithm converged rapidly under 10 approximation iterations throughout the trials. 
Since the approximation step computes the expected pairwise energy of each trajectory for that of the neighbors, the computational cost grows quadratically with the neighborhood size.
However, the proposed algorithm successfully converged within $200$ms for considerably large search spaces and $k$, which led the swarm to consensus \ref{fig:convergence}(b).
We experimented with a search space consisting of 25 trajectories and 10 robots and observed that the robots reached cohesion with minimal velocity differences while avoiding collisions with each other. 
Once converged, the shape of the formation remained roughly the same while maintaining the average distance among the robots as constant (Figs. \ref{fig:convergence}(c), \ref{fig:trajs}(a)). 

We compared our method's consensus reaching process to the algorithm proposed in \cite{vicsek1995novel}, with the same topological and initial conditions. 
Briefly, self-propelled particles in the Vicsek model adjust their velocities and the heading directions in the neighborhood to reach the velocity consensus. 
Even though the Vicsek model reached the velocity consensus faster, Figs. \ref{fig:convergence}(c)-(e) depict that the proposed method performed reasonably well to achieve even tighter alignments.
Further, our method outperformed the Vicsek model by achieving cohesive formations, such as pentagons, by minimizing the interaction energy.
In contrast, the particles in the Vicsek model did not show such behavior; refer to Fig. \ref{fig:convergence}(d), as the original work overlooks the pairwise interactions \cite{olfati2006flocking}.
We observed fragmentation in 3-dimensional formations caused by the highly combinatorial outcome space however, the motions visually resembled the collective behavior in natural swarms (Fig. \ref{fig:trajs}(b)).
Figs. \ref{fig:trajs}(c)-(d) show the initial and final robots' distributions for physical UAV experiments.
We observed that collision avoidance to dependent on the parameters of the interaction energy, mainly when operating in tight indoor spaces.
Although the interaction energy function in Eq.\eqref{eq:pairwise} considers a homogeneity assumption, we denote that the proposed framework is also extensible toward heterogeneous robot swarms with different attraction and repulsion kernels.
Further, when coordinating swarms in outdoor environments, with large inter-robot distances, it is possible to increase the planning horizon to discount communication and perception delays while keeping the collective behavior intact.
A supplementary video demonstration for this work can be found at: \href{https://youtu.be/KVkNUKgViSg}{https://youtu.be/KVkNUKgViSg}.



\begin{figure}
    \centering
      \subfigure[]
 	{\includegraphics[width = 0.24\textwidth, trim={0cm 0.2cm 0cm 0cm}]{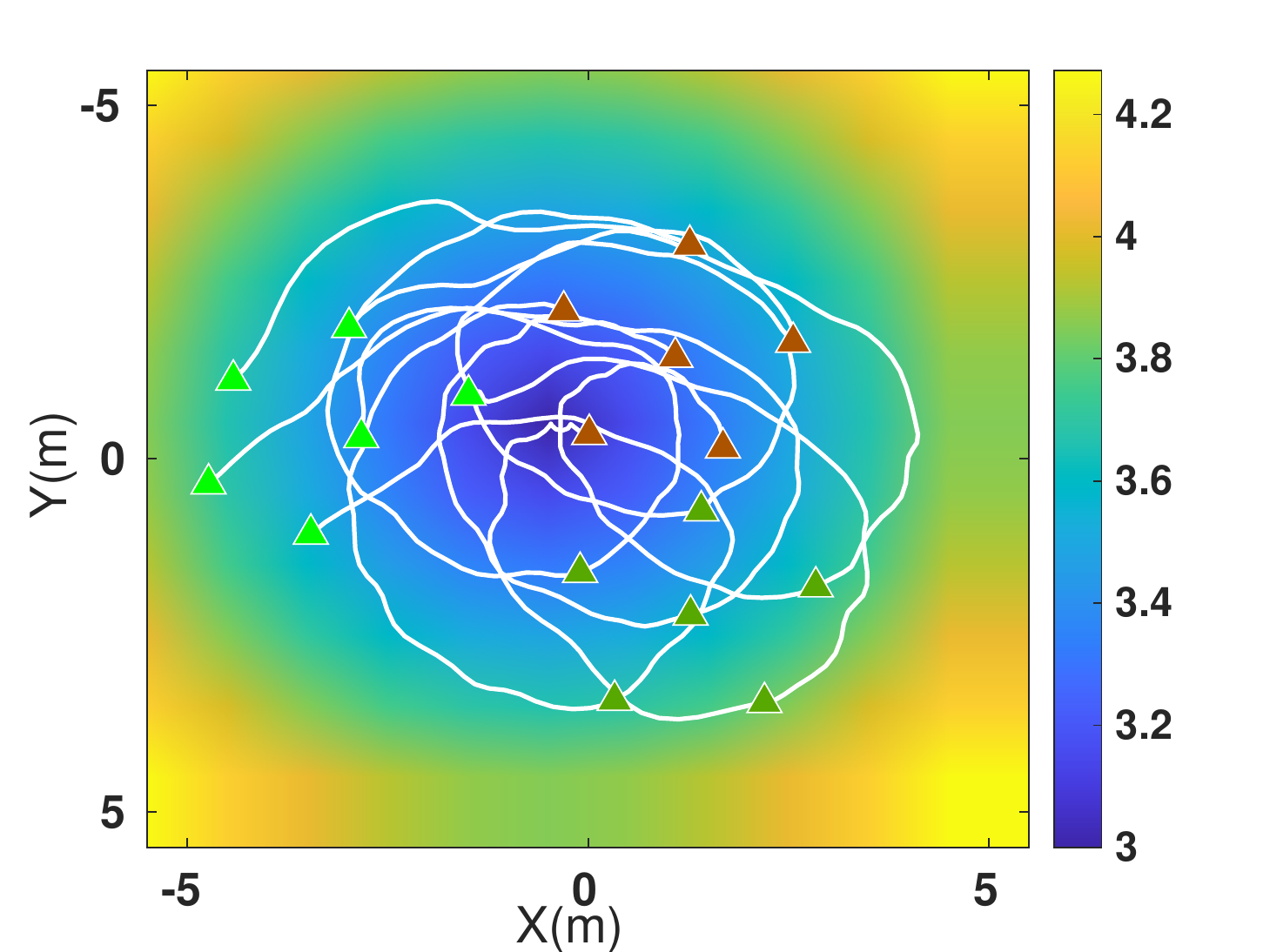}}
 \subfigure[]
 	{\includegraphics[width = 0.23\textwidth, trim={0cm 0.2cm 0cm 1cm}]{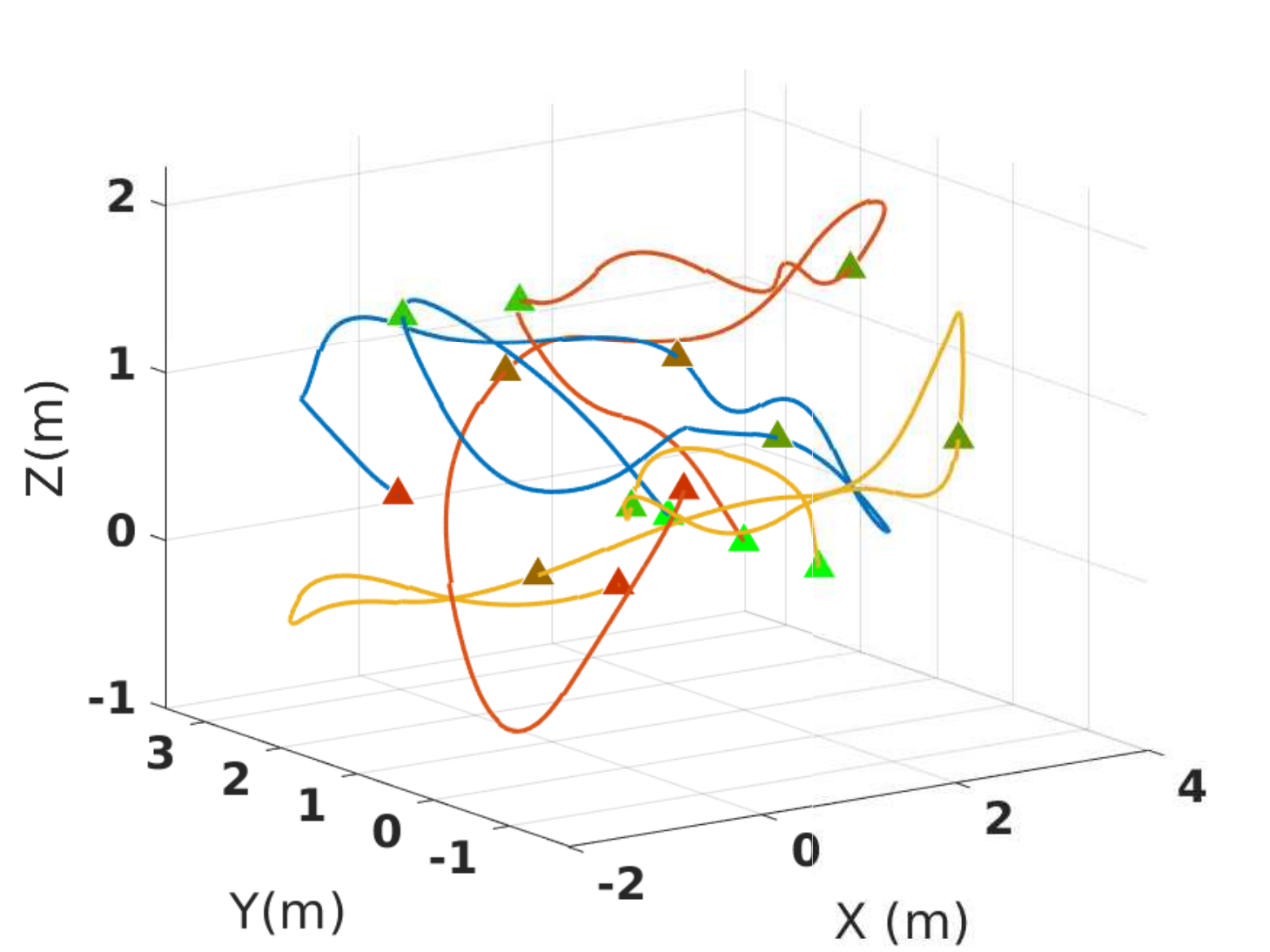}}
 	      \subfigure[]
 	{\includegraphics[width = 0.22\textwidth, trim={0cm 0.2cm 0cm 0.5cm}]{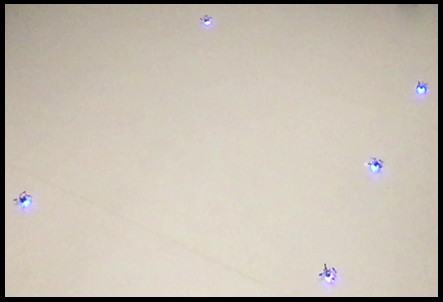}}
 \subfigure[]
 	{\includegraphics[width = 0.22\textwidth, trim={0cm 0.2cm 0cm 0.5cm}]{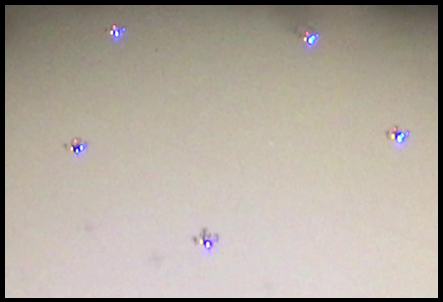}}
    \caption{(a) Trajectories of the robots after reaching consensus, plotted over the roosting energy function. (b). Trajectories of 3 robots in 3D space. Colors of the robots' markers change from green to red with time. (c) Initial and consensus reached formations (d) for five Crazyflie nano-drones.}
    \label{fig:trajs}
         \vspace{-0.5cm}
\end{figure}
\vspace{-0.1cm}
\section{Conclusions}
In this work, we have proposed a novel approach to simulating flocking behavior with UAVs by utilizing the local information. 
We have employed an interaction energy-based approach to represent the robots' relationships in the swarm.
Our method embeds the UAV dynamics into the outcome-space and results in feasible trajectories for the vehicles.
Consequently, our method eliminates the tedious parameter tuning phase of conventional flocking algorithms in practice.
The resulting algorithm builds on MFA and can control a swarm of UAVs to flocking consensus while avoiding collisions. 
Due to the fast converging nature, this approach suits online UAV flocking control usage.

\bibliographystyle{ieeetr}
\bibliography{root}

\begin{thebibliography}{10}

\bibitem{sharma2016uav}
V.~Sharma, M.~Bennis, and R.~Kumar, ``Uav-assisted heterogeneous networks for
  capacity enhancement,'' {\em IEEE Communications Letters}, vol.~20, no.~6,
  pp.~1207--1210, 2016.

\bibitem{tolstaya2020learning}
E.~Tolstaya, F.~Gama, J.~Paulos, G.~Pappas, V.~Kumar, and A.~Ribeiro,
  ``Learning decentralized controllers for robot swarms with graph neural
  networks,'' in {\em Conference on Robot Learning}, pp.~671--682, 2020.

\bibitem{honig2018trajectory}
W.~H{\"o}nig, J.~A. Preiss, T.~S. Kumar, G.~S. Sukhatme, and N.~Ayanian,
  ``Trajectory planning for quadrotor swarms,'' {\em IEEE Transactions on
  Robotics}, no.~99, pp.~1--14, 2018.

\bibitem{luis2020online}
C.~E. Luis, M.~Vukosavljev, and A.~P. Schoellig, ``Online trajectory generation
  with distributed model predictive control for multi-robot motion planning,''
  {\em IEEE Robotics and Automation Letters}, vol.~5, no.~2, pp.~604--611,
  2020.

\bibitem{zhu2019distributed}
H.~Zhu, J.~Juhl, L.~Ferranti, and J.~Alonso-Mora, ``Distributed multi-robot
  formation splitting and merging in dynamic environments,'' in {\em 2019
  International Conference on Robotics and Automation (ICRA)}, pp.~9080--9086,
  IEEE, 2019.

\bibitem{olfati2006flocking}
R.~Olfati-Saber, ``Flocking for multi-agent dynamic systems: Algorithms and
  theory,'' {\em IEEE Transactions on automatic control}, vol.~51, no.~3,
  pp.~401--420, 2006.

\bibitem{reynolds1987flocks}
C.~W. Reynolds, ``Flocks, herds and schools: A distributed behavioral model,''
  in {\em ACM SIGGRAPH computer graphics}, vol.~21, pp.~25--34, ACM, 1987.

\bibitem{okubo1986dynamical}
A.~Okubo, ``Dynamical aspects of animal grouping: swarms, schools, flocks, and
  herds,'' {\em Advances in biophysics}, vol.~22, pp.~1--94, 1986.

\bibitem{blume1971ising}
M.~Blume, V.~J. Emery, and R.~B. Griffiths, ``Ising model for the $\lambda$
  transition and phase separation in he 3-he 4 mixtures,'' {\em Physical review
  A}, vol.~4, no.~3, p.~1071, 1971.

\bibitem{krahenbuhl2012efficient}
P.~Kr{\"a}henb{\"u}hl and V.~Koltun, ``Efficient inference in fully connected
  crfs with gaussian edge potentials,'' {\em arXiv preprint arXiv:1210.5644},
  2012.

\bibitem{shankar2020mrfmap}
K.~S. Shankar and N.~Michael, ``Mrfmap: Online probabilistic 3d mapping using
  forward ray sensor models,'' {\em arXiv preprint arXiv:2006.03512}, 2020.

\bibitem{vicsek1995novel}
T.~Vicsek, A.~Czir{\'o}k, E.~Ben-Jacob, I.~Cohen, and O.~Shochet, ``Novel type
  of phase transition in a system of self-driven particles,'' {\em Physical
  review letters}, vol.~75, no.~6, p.~1226, 1995.

\bibitem{alonso2017multi}
J.~Alonso-Mora, S.~Baker, and D.~Rus, ``Multi-robot formation control and
  object transport in dynamic environments via constrained optimization,'' {\em
  The International Journal of Robotics Research}, vol.~36, no.~9,
  pp.~1000--1021, 2017.

\bibitem{vasarhelyi2018optimized}
G.~V{\'a}s{\'a}rhelyi, C.~Vir{\'a}gh, G.~Somorjai, T.~Nepusz, A.~E. Eiben, and
  T.~Vicsek, ``Optimized flocking of autonomous drones in confined
  environments,'' {\em Science Robotics}, vol.~3, no.~20, 2018.

\bibitem{fernando2019formation}
M.~Fernando and L.~Liu, ``Formation control and navigation of a quadrotor
  swarm,'' in {\em 2019 International Conference on Unmanned Aircraft Systems
  (ICUAS)}, pp.~284--291, IEEE, 2019.

\bibitem{turpin2013trajectory}
M.~Turpin, N.~Michael, and V.~Kumar, ``Trajectory planning and assignment in
  multirobot systems,'' in {\em Algorithmic foundations of robotics X},
  pp.~175--190, Springer, 2013.

\bibitem{8276634}
A.~Weinstein, A.~Cho, G.~Loianno, and V.~Kumar, ``Visual inertial odometry
  swarm: An autonomous swarm of vision-based quadrotors,'' {\em IEEE Robotics
  and Automation Letters}, vol.~3, no.~3, pp.~1801--1807, 2018.

\bibitem{shishika2017mosquito}
D.~Shishika and D.~A. Paley, ``Mosquito-inspired swarming for decentralized
  pursuit with autonomous vehicles,'' in {\em 2017 American Control Conference
  (ACC)}, pp.~923--929, IEEE, 2017.

\bibitem{vasarhelyi2014outdoor}
G.~V{\'a}s{\'a}rhelyi, C.~Vir{\'a}gh, G.~Somorjai, N.~Tarcai,
  T.~Sz{\"o}r{\'e}nyi, T.~Nepusz, and T.~Vicsek, ``Outdoor flocking and
  formation flight with autonomous aerial robots,'' in {\em 2014 IEEE/RSJ
  International Conference on Intelligent Robots and Systems}, pp.~3866--3873,
  IEEE, 2014.

\bibitem{hu2020vgai}
T.-K. Hu, F.~Gama, Z.~Wang, A.~Ribeiro, and B.~M. Sadler, ``Vgai: A
  vision-based decentralized controller learning framework for robot swarms,''
  {\em arXiv preprint arXiv:2002.02308}, 2020.

\bibitem{schilling2019learning}
F.~Schilling, J.~Lecoeur, F.~Schiano, and D.~Floreano, ``Learning vision-based
  flight in drone swarms by imitation,'' {\em IEEE Robotics and Automation
  Letters}, vol.~4, no.~4, pp.~4523--4530, 2019.

\bibitem{xi2006gibbs}
W.~Xi, X.~Tan, and J.~S. Baras, ``Gibbs sampler-based coordination of
  autonomous swarms,'' {\em Automatica}, vol.~42, no.~7, pp.~1107--1119, 2006.

\bibitem{fernando2020swarming}
M.~Fernando and L.~Liu, ``Swarming of aerial robots with markov random field
  optimization,'' {\em arXiv preprint arXiv:2010.06274}, 2020.

\bibitem{tanner2003stable}
H.~G. Tanner, A.~Jadbabaie, and G.~J. Pappas, ``Stable flocking of mobile
  agents part i: dynamic topology,'' in {\em 42nd IEEE International Conference
  on Decision and Control (IEEE Cat. No. 03CH37475)}, vol.~2, pp.~2016--2021,
  IEEE, 2003.

\bibitem{saber2003flocking}
R.~O. Saber and R.~M. Murray, ``Flocking with obstacle avoidance: Cooperation
  with limited communication in mobile networks,'' in {\em 42nd IEEE
  International Conference on Decision and Control (IEEE Cat. No. 03CH37475)},
  vol.~2, pp.~2022--2028, IEEE, 2003.

\bibitem{gazi2013lagrangian}
V.~Gazi, ``On lagrangian dynamics based modeling of swarm behavior,'' {\em
  Physica D: Nonlinear Phenomena}, vol.~260, pp.~159--175, 2013.

\bibitem{liu2017planning}
S.~Liu, M.~Watterson, K.~Mohta, K.~Sun, S.~Bhattacharya, C.~J. Taylor, and
  V.~Kumar, ``Planning dynamically feasible trajectories for quadrotors using
  safe flight corridors in 3-d complex environments,'' {\em IEEE Robotics and
  Automation Letters}, vol.~2, no.~3, pp.~1688--1695, 2017.

\bibitem{ballerini2008interaction}
M.~Ballerini, N.~Cabibbo, R.~Candelier, A.~Cavagna, E.~Cisbani, I.~Giardina,
  V.~Lecomte, A.~Orlandi, G.~Parisi, A.~Procaccini, {\em et~al.}, ``Interaction
  ruling animal collective behavior depends on topological rather than metric
  distance: Evidence from a field study,'' {\em Proceedings of the national
  academy of sciences}, vol.~105, no.~4, pp.~1232--1237, 2008.

\bibitem{shang2014influence}
Y.~Shang and R.~Bouffanais, ``Influence of the number of topologically
  interacting neighbors on swarm dynamics,'' {\em Scientific reports}, vol.~4,
  p.~4184, 2014.

\bibitem{carrillo2013new}
J.~A. Carrillo, S.~Martin, and V.~Panferov, ``A new interaction potential for
  swarming models,'' {\em Physica D: Nonlinear Phenomena}, vol.~260,
  pp.~112--126, 2013.

\bibitem{pivtoraiko2009differentially}
M.~Pivtoraiko, R.~A. Knepper, and A.~Kelly, ``Differentially constrained mobile
  robot motion planning in state lattices,'' {\em Journal of Field Robotics},
  vol.~26, no.~3, pp.~308--333, 2009.

\bibitem{koller2009probabilistic}
D.~Koller and N.~Friedman, {\em Probabilistic graphical models: principles and
  techniques}.
\newblock MIT press, 2009.

\bibitem{lee2010geometric}
T.~Lee, M.~Leoky, and N.~H. McClamroch, ``Geometric tracking control of a
  quadrotor uav on se (3),'' in {\em Decision and Control (CDC), 2010 49th IEEE
  Conference on}, pp.~5420--5425, IEEE, 2010.

\end{thebibliography}

\end{document}